\documentclass{article}

\usepackage[preprint]{style}

\usepackage[utf8]{inputenc}
\usepackage[T1]{fontenc}
\usepackage{hyperref}
\usepackage{caption}
\usepackage{booktabs}
\usepackage{amsmath}
\usepackage{amssymb}
\usepackage{amsthm}
\usepackage{microtype}
\usepackage{graphicx}
\usepackage{wrapfig}
\usepackage{enumitem}

\microtypesetup{expansion=false}
\hypersetup{hypertexnames=false}
\graphicspath{{figures/}}
\title{Distributional Reward Calibration and Budget-Neutral Credit Assignment for Multimodal RLVR}

\author{Lang Chen, Zongxin Xu, Borui Wang, Anran Liu, and Shitong Chen\thanks{Corresponding author, Shenzhen University.}}

\begin{document}

\maketitle

\begin{abstract}
We revisit advantage estimation in Group Relative Policy Optimization (GRPO) and observe that any token-level advantage admits a two-stage factorization: a \emph{sample-level baselining} step that converts scalar verifier rewards into a comparable sequence-level signal, and a \emph{token-level credit assignment} step that distributes that signal over the trajectory. Existing estimators typically address only one of these stages: affine baselines preserve the shape of each task's reward law, so heavy tails and binary cliffs propagate into the gradient, while token-level credit rules tie the size of each token's update to raw confidence scales and need not conserve the total credit issued by the verifier. We propose \textbf{TACE} (Two-stage Advantage and Credit Estimation), a two-stage estimator that addresses both stages. At stage~(S1), we replace linear standardization with a one-dimensional optimal-transport map to a standard normal, which gives a common population reward law across tasks and caps finite-sample outliers deterministically. At stage~(S2), we rank-calibrate a stop-gradient evidence contrast within each trajectory and renormalize the clipped token weights to have mean one, so credit is redistributed across tokens while its total mass and sign are conserved. We prove population-level inter-task moment equity, a deterministic tail bound, and mass-conservation, boundedness, and sign-anchoring properties that affine baselines and unnormalized token weights do not satisfy. On a multimodal reasoning suite, TACE obtains the strongest results on a majority of the benchmarks but does not dominate every setting.
\end{abstract}

\section{Introduction}
\label{sec:intro}

GRPO reduces reinforcement learning with verifiable rewards to a simple estimator: it standardizes scalar rewards into a sequence-level advantage and broadcasts that advantage uniformly over tokens. This practice conflates two independent decisions: \textbf{(S1) sample-level baselining}, which turns a scalar reward into a comparable advantage, and \textbf{(S2) token-level credit assignment}, which distributes that advantage over the trajectory's tokens. Existing variants typically address \emph{exactly one} stage: Dr.\,GRPO~\citep{liu2025understandingr1zeroliketrainingcriticaldrgrpo} and EMA-GRPO~\citep{feng2025onethinkerallinonereasoningmodel} refine the affine form of the baseline while leaving token credit uniform, whereas recent token-crediting rules introduce per-token signals while retaining an affine sequence-level baseline~\citep{yang2026selfdistilledrlvr}.

The two stages carry complementary failure modes that neither alone can repair. An affine baseline cannot reshape a reward law: heavy tails, binary cliffs, and inter-task variance gaps survive the rescaling and propagate verbatim into the gradient magnitude and then into every token of the trajectory. Conversely, magnitude-based token weights address allocation but tie the size of each token's credit to the raw scale of the underlying signal, so token reallocation can silently amplify or attenuate the total signal the verifier issued. What is missing is a two-stage estimator in which the sample baseline is \emph{distribution-aware} rather than affine, and the token channel is \emph{budget-neutral}: it may redistribute credit, never inflate it.

We propose \textbf{TACE}, a two-stage advantage estimator that gives each stage the invariant it needs: distribution-aware rank calibration at stage~(S1) and budget-neutral credit reallocation at stage~(S2). Stage~(S1) uses the population 1D optimal-transport (OT) map onto $\mathcal N(0,1)$, implemented by $A_i = \Phi^{-1}(\widehat F_\tau(R_i))$; it calibrates the advantage by the \emph{rank} of the reward rather than its raw value, yielding population-level inter-task equity, a deterministic finite-sample tail cap, and order preservation at one $O(N\log N)$ sort per group. Stage~(S2) builds the token channel on a stop-gradient evidence contrast, as used in previous work~\citep{yang2026selfdistilledrlvr,zhao2026opsd,hbotter2026reinforcementlearningselfdistillation_sdpo}, and introduces two modifications: the per-token evidence difference is rank-calibrated \emph{within the trajectory}, removing its dependence on teacher confidence and model calibration; and the clipped, sign-anchored weights are renormalized to mean one, so the total credit mass issued by the verifier is exactly conserved while its allocation across tokens is sharpened. Because the two rank maps act on disjoint indices (samples vs.\ tokens) and both preserve signs, the stages compose without interference; the net cost over vanilla GRPO is one sort per group, one sort per trajectory, and one additional privileged-context pass.

In summary, our contributions are three-fold:
\begin{itemize}[nosep,leftmargin=*]
    \item We propose a distribution-aware stage-(S1) baseline, replacing GRPO's affine standardization with a 1D OT map onto $\mathcal N(0,1)$, and prove population-level inter-task moment equity (not just two-moment matching) plus a deterministic finite-sample outlier cap.
    \item We introduce a rank-calibrated, mass-preserving stage-(S2) credit rule: starting from the direction-anchored evidence ratio in previous work, we rank-calibrate the token evidence within each trajectory and renormalize the clipped weights to mean one. We prove positivity, a two-sided deterministic bound, exact credit-mass conservation, sign anchoring, and invariance to strictly increasing recalibration of the evidence signal; because the two stages act on disjoint indices and preserve signs, the composed estimator also enjoys a composition guarantee.
    \item We provide evidence on multimodal RLVR: on a multimodal reasoning suite, the composed estimator obtains the strongest results on a majority of the benchmarks while exposing the settings where further calibration is needed.
\end{itemize}

\section{Related Work}

\subsection{Reward Normalization Inside GRPO}
Since GRPO's introduction~\citep{grpo,deepseekr1}, a substantial body of work has revisited how the group-relative advantage $A_i = (R_i-\mu_G)/\sigma_G$ should be standardized when the verifier signal is heterogeneous across tasks or rollouts. Dr.\,GRPO~\citep{liu2025understandingr1zeroliketrainingcriticaldrgrpo} drops the per-group standard deviation and replaces it with a centered baseline; EMA-GRPO~\citep{feng2025onethinkerallinonereasoningmodel} swaps the local $\sigma_G$ for a task-wise running estimate to mitigate variance imbalance across multimodal task families; further extensions explore softer or task-conditioned variants~\citep{yu2025dapo,gao2025softadaptivepolicyoptimization,xie2025logic,zheng2025group}. All of these schemes apply an \emph{affine} transformation to the raw reward, which by construction cannot reshape the underlying reward distribution: heavy tails, binary cliffs, and inter-task scale gaps survive the rescaling and propagate into the gradient magnitude. TACE departs from this affine family by applying a non-linear, rank-based transport to a fixed reference distribution, recovering inter-task equity and bounded tails as automatic consequences rather than as side products of hyperparameter tuning.

\subsection{Advantage Reallocation in Verifier-Only RL}
Fine-grained supervision for reasoning can be organized along two axes: where a token score comes from, and what invariant the score is required to obey. Process supervision learns an intermediate judge from annotations, generated rationales, or model-based checks~\citep{lightman2024lets,wang2024math,luo2024improve,chen2024step,zhang2024generative,cui2025process,dai2025sgrpoearlyexitreinforcement,yang2025dynamicearlyexitreasoning,yang2025testtimepromptintervention}. Verifier-only alternatives derive a relative token score from rollout statistics such as entropy, salience, attention, or outcome sensitivity~\citep{cheng2026reasoning,wangbeyond,chen2025seed,sun2025ktaemodelfreealgorithmkeytokens,xie2025unlocking,li2025attention,chen2025beyond,li2026outcome}. The former introduces a separate supervision surface; the latter usually controls token emphasis without a guarantee on the total credit issued to a trajectory. TACE addresses the second invariant directly: its within-trajectory rank map supplies a common evidence scale, while mean-one renormalization turns token learning into reallocation rather than amplification.

\subsection{Privileged Context as a Magnitude Signal}
Distillation-style training offers a useful diagnostic for this design. External on-policy teachers can provide position-wise targets, but they require a second inference stack and compatible output spaces~\citep{agarwal2024gkd,lu2025onpolicy,fu2026revisitingonpolicydistillationempirical,mimo2025flash}. Self-conditioning removes the extra model but creates an observability mismatch when the conditioned copy sees the answer~\citep{zhao2026opsd,hbotter2026reinforcementlearningselfdistillation_sdpo}. The analysis of \citet{yang2026selfdistilledrlvr} therefore confines privileged evidence to a stop-gradient multiplier whose sign is governed by the verifier. TACE adopts that confinement, but changes the multiplier's calibration: evidence values are converted to within-trajectory ranks, clipped, and renormalized to mean one. Consequently, a monotone change in teacher confidence cannot alter the allocation, and the trajectory-level verifier decision remains the only source of aggregate reinforcement.

\subsection{Optimal Transport for Language Model Training}
Optimal transport (OT) has appeared in language-model post-training mainly as a distance functional: token-level Wasserstein distances regularize preference alignment, stochastic-dominance constraints reshape reward distributions, and OT-based projections enforce safety properties in latent space~\citep{melnyk2024distributionalpreferencealignmentllms,li2025optimaltransportbasedtokenweighting,na2026semanticawarewassersteinpolicyregularization,nanfack2026efficientrefusalablationllm}. The role we give OT is different: rather than measuring a divergence between two learned distributions, we exploit the closed-form 1D Brenier map onto $\mathcal{N}(0,1)$ as the canonical solution to the sample-level baselining problem inside GRPO. This places OT inside the advantage estimator itself, where it provides distribution invariance, outlier robustness, and order preservation as immediate properties of the rank-CDF composition.

\subsection{Multimodal RL Post-Training}
Reinforcement learning has rapidly become a standard tool for sharpening multimodal reasoning in vision-language models~\citep{li2025star,sun2025reinforcement,feng2025video,sun2025spacevista,zhou2025reinforced,duan2025codeplot,chen2025advancing,chen2025ares,meng2025open,wang2025perceptionawarepolicyoptimizationmultimodal,zhang2026lessrightbidirectionalperceptual}, where reward heterogeneity across perception-heavy and reasoning-heavy tasks is far more pronounced than in pure-text settings. Existing recipes typically address this with task-specific data curation, auxiliary perceptual objectives, or external captioners~\citep{tian2025more,yang2025learninglookdisentangledcurriculum,tu2025perceptionconsistencymultimodallargelanguage}. TACE is orthogonal to these recipes: by enforcing distribution invariance at stage~(S1), it removes reward-calibration differences from the advantage scale, so the same estimator can ingest perception, mathematics, and chart rewards without per-task scale tuning.

\section{Background and Diagnosis}
\label{sec:background}

\newtheorem{theorem}{Theorem}
\newtheorem{proposition}{Proposition}

Section~\ref{sec:prelim} fixes the notation we use for GRPO and on-policy distillation.

\subsection{Preliminaries}
\label{sec:prelim}

\paragraph{GRPO.}
RLVR post-training assumes a policy $\pi_\theta$ generating responses $y=(y_1,\dots,y_T)$ autoregressively from query $x$ under a scalar verifier $R(x,y)$, often binary. GRPO~\citep{grpo} samples $G$ candidates $\{y^{(i)}\}\sim\pi_{\theta_{\mathrm{old}}}(\cdot\mid x)$ and standardizes their rewards into a sequence-level advantage $A^{(i)} = (R(x,y^{(i)}) - \mu_G)/\sigma_G$, then reuses the same $A^{(i)}$ for every token of $y^{(i)}$ in the clipped policy-gradient objective
\begin{equation}
\mathcal{L}_{\mathrm{GRPO}}(\theta) = \mathbb{E}\!\left[\frac{1}{G}\sum_{i=1}^{G}\frac{1}{|y^{(i)}|}\sum_{t=1}^{|y^{(i)}|} \min\!\left(\rho_t^{(i)} A^{(i)},\; \mathrm{clip}(\rho_t^{(i)},1\!-\!\epsilon,1\!+\!\epsilon)\,A^{(i)}\right)\right],
\label{eq:grpo}
\end{equation}
where $\rho_t^{(i)}$ is the standard importance ratio. Per-token credit within a trajectory is necessarily uniform.

\paragraph{On-policy and self-distillation objectives.}
A second line densifies token-level supervision through a reference $P_T$. On-policy distillation (OPD)~\citep{agarwal2024gkd} takes $P_T$ from a separate stronger model on the same query $x$. On-policy self-distillation (OPSD)~\citep{zhao2026opsd,hbotter2026reinforcementlearningselfdistillation_sdpo} reuses $\pi_\theta$ as both player and reference, with the reference receiving privileged input $r$ (in our setting, the final ground-truth answer). With $P_S(\cdot\mid y_{<t})\triangleq \pi_\theta(\cdot\mid x, y_{<t})$ and $P_T(\cdot\mid y_{<t})\triangleq \pi_\theta(\cdot\mid x, r, y_{<t})$, both fit
\begin{equation}
\mathcal{L}_{\mathrm{OP(S)D}}(\theta) = \mathbb{E}_{(x,r)\sim\mathcal{S}}\,\mathbb{E}_{y\sim P_S(\cdot\mid x)}\!\left[\frac{1}{|y|}\sum_{t=1}^{|y|} D\!\bigl(P_T \,\Vert\, P_S\bigr)\right].
\label{eq:opsd}
\end{equation}
Here $D$ denotes a chosen statistical divergence, such as KL or JS. The shared design makes $P_T$ determine the gradient \emph{direction}, a choice TACE cannot inherit because it violates direction anchoring.

\section{Method}
\label{sec:method}

\begin{figure}[t]
    \centering
    \includegraphics[width=0.92\linewidth]{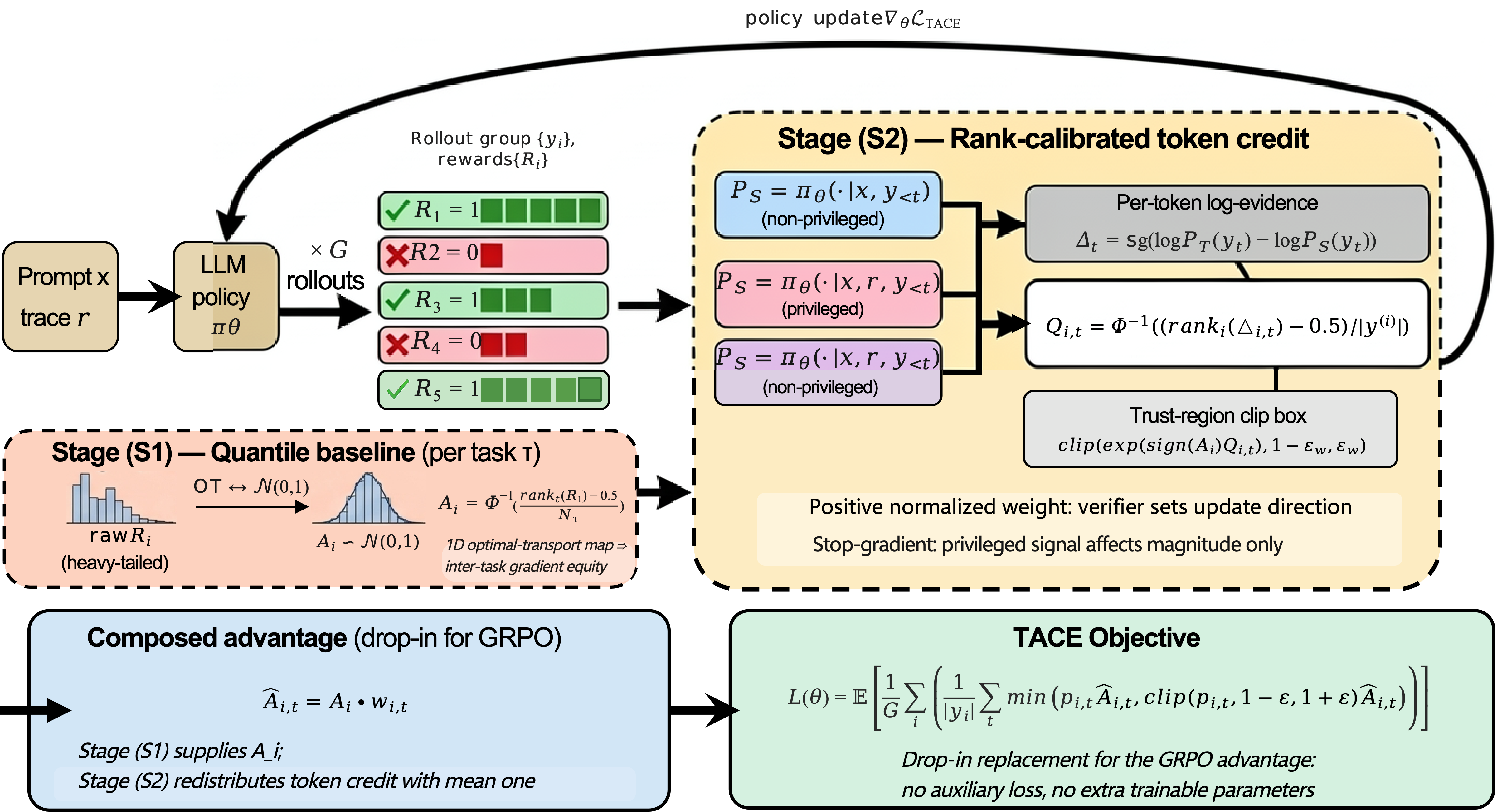}
    \vspace{-0.6em}
    \caption{\textbf{Overview of TACE.} \textbf{Stage~(S1), ours:} replaces $(R_i-\mu_G)/\sigma_G$ with the 1D optimal-transport map onto $\mathcal{N}(0,1)$, $A_i = \Phi^{-1}((\mathrm{rank}_\tau(R_i)-0.5)/N_\tau)$, for inter-task gradient equity. \textbf{Stage~(S2), ours (starting from the evidence ratio of \citealp{yang2026selfdistilledrlvr}):} runs $\pi_\theta$ under non-privileged ($P_S$) and privileged ($P_T$) contexts, rank-calibrates the per-token log-evidence within the trajectory, and applies a sign-anchored, mean-one (mass-preserving) magnitude weight $w_{i,t}$. The composed token advantage $\hat{A}_{i,t} = A_i\cdot w_{i,t}$ plugs into the standard GRPO clipped surrogate.}
    \label{fig:framework}
    \vspace{-0.5em}
\end{figure}

Figure~\ref{fig:framework} gives an overview of the composed estimator. One architectural requirement governs stage~(S2): privileged evidence may influence how strongly a token is credited, but the verifier remains the sole source of reinforcement or suppression. \textbf{TACE} (\emph{Two-stage Advantage and Credit Estimation}) preserves this separation while giving each stage its corresponding invariant: stage~(S1) replaces linear standardization with a distribution-aware OT baseline, and stage~(S2) rank-calibrates the evidence signal and conserves credit mass.

\subsection{A Two-Stage View of GRPO Advantage Estimation}
\label{sec:design-principle}

GRPO writes its advantage as $A_i = (R_i-\mu_G)/\sigma_G$ with the same $A_i$ shared by every token in $y^{(i)}$, but this hides two statistically independent operations:
\begin{equation}
    \underbrace{\{R_i\}_{i=1}^{G} \;\xrightarrow{\;\text{(S1)}\;}\; A_i}_{\text{sample-level baselining}}
    \;\xrightarrow{\;\text{(S2)}\;}\;
    \underbrace{\hat{A}_{i,t} \;=\; A_i \cdot w_{i,t}}_{\text{token-level credit assignment}}.
    \label{eq:two-stage}
\end{equation}
Vanilla GRPO uses linear standardization in (S1) and the trivial weight $w_{i,t}=1$ in (S2). The design space in Table~\ref{tab:unified-method} is therefore almost diagonal: Dr.\,GRPO~\citep{liu2025understandingr1zeroliketrainingcriticaldrgrpo} and EMA-GRPO~\citep{feng2025onethinkerallinonereasoningmodel} alter only the affine form of $A_i$, whereas RLSD~\citep{yang2026selfdistilledrlvr} modifies only the token multiplier. Each stage carries a distinct failure mode the other cannot repair: an affine baseline cannot reshape a reward law, so distributional differences remain in the update magnitude; uniform credit cannot distinguish pivotal tokens from filler, while using $P_T/P_S$ as a gradient target is structurally leakage-prone~\citep{yang2026selfdistilledrlvr}.

\paragraph{Two invariants any principled estimator must satisfy.}
\textbf{I1 (Distribution invariance):} the marginal distribution of $A_i$ should not depend on the verifier's calibration; (S1) must be a \emph{distribution-aware} map, not an affine one. \textbf{I2 (Direction anchoring):} the sign of every token-level advantage must equal $\mathrm{sign}(A_i)$, the verifier's verdict on the trajectory; (S2) must act on the \emph{magnitude} of credit, never on its direction. Section~\ref{sec:s1-quantile} instantiates I1 via the 1D optimal-transport map onto $\mathcal{N}(0,1)$; Section~\ref{sec:s2-rlsd} instantiates I2 via a stop-gradient evidence ratio whose exponent is $\mathrm{sign}(A_i)$.

\begin{table}[t]
\caption{Representative estimators viewed through Eq.~\eqref{eq:two-stage}. TACE contributes the distribution-aware stage-(S1) baseline and composes it with a rank-calibrated, mass-preserving version of direction-anchored credit.}
\label{tab:unified-method}
\centering
\footnotesize
\setlength{\tabcolsep}{0.4em}
\resizebox{\linewidth}{!}{%
\begin{tabular}{llll}
\toprule
\textbf{Method} & \textbf{Sample baseline $A_i$ (S1)} & \textbf{Token weight $w_{i,t}$ (S2)} & \textbf{Direction anchor} \\
\midrule
GRPO~\citep{grpo}              & $(R_i-\mu_G)/\sigma_G$                       & $1$                                              & Verifier \\
Dr.\,GRPO~\citep{liu2025understandingr1zeroliketrainingcriticaldrgrpo}  & $R_i-\mu_G$ & $1$                                              & Verifier \\
EMA-GRPO~\citep{feng2025onethinkerallinonereasoningmodel}    & $(R_i-\mu_G)/\sigma_\tau^{(t)}$              & $1$                                              & Verifier \\
OPSD~\citep{zhao2026opsd}      & \textemdash                                   & $P_T/P_S$ as gradient target                     & \textbf{Teacher} \\
SDPO~\citep{hbotter2026reinforcementlearningselfdistillation_sdpo} & \textemdash & sign-free evidence ratio                       & \textbf{Teacher} \\
\midrule
RLSD~\citep{yang2026selfdistilledrlvr} & $(R_i-\mu_G)/\sigma_G$ & $\mathrm{clip}\!\bigl((P_T/P_S)^{\mathrm{sign}(A_i)}\bigr)$ & Verifier \\
\midrule
\textbf{TACE (Ours)} & $\Phi^{-1}\!\bigl(\widehat{F}_\tau(R_i)\bigr)$ & mean-one $\mathrm{clip}\!\bigl(e^{\mathrm{sign}(A_i)Q_{i,t}}\bigr)$, $Q_{i,t}{=}\Phi^{-1}(\widehat F_i(\Delta_{i,t}))$ & Verifier \\
\bottomrule
\end{tabular}
}
\end{table}

\subsection{Stage 1: Quantile-Normalized Sample Baseline}
\label{sec:s1-quantile}

Task-dependent verifier statistics are rarely well-conditioned: binary outcomes create highly skewed two-point laws, while richer scalar verifiers can be heavy-tailed. Linear standardization carries these differences into the advantage because it is an affine, shape-preserving map. To enforce invariant I1 we therefore map the population reward law $P_{\mathcal{R}_\tau}$ to a fixed reference $P_{\mathcal{N}}=\mathcal{N}(0,1)$ at minimum quadratic cost. When a group has no reward variation, we set $A_i=0$, matching the usual GRPO convention for degenerate groups. Otherwise, Brenier's theorem gives a unique increasing population transport in one dimension; its finite-group plug-in is the mid-rank composition
\begin{equation}
    A_i \;\triangleq\; \Psi_\tau(R_i) \;=\; \Phi^{-1}\!\bigl(\widehat{F}_\tau(R_i)\bigr),
    \qquad
    \widehat{F}_\tau(R_i) \;=\; \frac{\mathrm{rank}_\tau(R_i)-\tfrac{1}{2}}{N_\tau},
    \label{eq:quantile}
\end{equation}
where $N_\tau$ is the group size for task $\tau$, $\widehat{F}_\tau$ is its empirical CDF over the current group, and $\Phi^{-1}$ is the standard normal quantile function. Identical raw rewards (a frequent case under binary verifiers) are mapped to the average of their quantile images so that identical outcomes receive identical credit.

\paragraph{Properties relevant to (S1).}
Three immediate consequences of Eq.~\eqref{eq:quantile} jointly satisfy invariant I1:
\begin{itemize}[nosep,leftmargin=*]
    \item \textit{Inter-task gradient equity.} Under the continuous reward law assumed in Proposition~\ref{prop:equity}, $A_i$ has zero mean and unit variance regardless of the scale or shape of $P_{\mathcal{R}_\tau}$. The empirical rank map likewise removes raw reward magnitudes and higher-order shape from the finite-sample score.
    \item \textit{Outlier and bimodality robustness.} A single anomalously large reward is mapped to $\Phi^{-1}\!\bigl((\mathrm{rank}-0.5)/N\bigr)$, which is bounded by the finite-sample rank grid; the two outcome categories of a binary verifier are mapped to bounded Gaussian quantiles.
    \item \textit{Order preservation.} $\Phi^{-1}\circ\widehat{F}_\tau$ is nondecreasing: distinct rewards receive ordered scores, while tied rewards receive the same score. In particular, $\mathrm{sign}(A_i)$ reflects the verifier's relative verdict within the group and is therefore safe to use as the direction anchor exploited by stage~(S2).
\end{itemize}
The implementation reduces to one $O(N\log N)$ sort per group; no additional hyperparameter is introduced at this stage.

\subsection{Stage 2: Rank-Calibrated, Mass-Preserving Token Credit}
\label{sec:s2-rlsd}

Stage~(S2) must allocate the well-conditioned scalar $A_i$ to the tokens of $y^{(i)}$ without violating invariant I2. Our starting point is the evidence-ratio credit of \citet{yang2026selfdistilledrlvr}: with $P_T \triangleq \pi_\theta(\cdot \mid x, r, y_{<t})$ and $P_S \triangleq \pi_\theta(\cdot \mid x, y_{<t})$, define the stop-gradient log-evidence $\Delta_{i,t} = \mathtt{sg}(\log P_T(y_t) - \log P_S(y_t))$. We then introduce two modifications that make the token channel distribution-free and budget-neutral, mirroring at the token level what stage~(S1) does at the sample level.

\paragraph{Modification 1: within-trajectory rank calibration.}
The raw scale of $\Delta_{i,t}$ is an accident of model calibration, privileged-context strength, and position; trajectories with a confident teacher therefore receive systematically larger token weights. We remove this confound by replacing $\Delta_{i,t}$ with its rank inside the trajectory,
\begin{equation}
    Q_{i,t} \;=\; \Phi^{-1}\!\left(\frac{\mathrm{rank}_{i}(\Delta_{i,t})-\tfrac{1}{2}}{|y^{(i)}|}\right),
    \label{eq:token-rank}
\end{equation}
with ties receiving the average of their quantile images. Because Eq.~\eqref{eq:token-rank} uses only the \emph{ordering} of the evidence within a trajectory, it is invariant to strictly increasing recalibration of $\Delta$ and gives every trajectory the same calibrated evidence scale, which is the token-level analogue of Eq.~\eqref{eq:quantile}.

\paragraph{Modification 2: credit-mass conservation.}
RLSD-style weights redistribute credit but need not preserve its total: if the teacher is confident along the whole trajectory, $\frac{1}{|y|}\sum_t w_{i,t}$ can drift away from $1$, silently amplifying or attenuating the trajectory-level signal that the verifier issued. We decouple \emph{reallocation} from \emph{amplification} by renormalizing the clipped, sign-anchored weight to have mean exactly one over the trajectory:
\begin{equation}
    u_{i,t} = \mathrm{clip}\!\Bigl(e^{\,\mathrm{sign}(A_i)\,Q_{i,t}},\,1-\epsilon_w,\,1+\epsilon_w\Bigr),
    \qquad
    w_{i,t} \;=\; \frac{u_{i,t}}{\frac{1}{|y^{(i)}|}\sum_{s=1}^{|y^{(i)}|} u_{i,s}},
    \label{eq:clipped-credit}
\end{equation}
together with the same short warm-up $w_{i,t}\!\leftarrow\!(1\!-\!\lambda)+\lambda\,w_{i,t}$, $\lambda$ ramping from $0$ to $1$ over the first 50 steps. The clipping idea is inherited from \citet{yang2026selfdistilledrlvr}; the rank calibration of Eq.~\eqref{eq:token-rank} and the mass-preserving normalization of Eq.~\eqref{eq:clipped-credit} are ours. Theorem~\ref{thm:mass} shows the resulting estimator satisfies positivity, a deterministic two-sided bound, exact credit-mass conservation, and sign anchoring.

\subsection{Theory: Equity, Tails, and Conservation}
\label{sec:theory}

The results below formalize the guarantees of the estimator we contribute. Proposition~\ref{prop:equity} applies the classical probability integral transform and inverse-normal quantile map to the GRPO baseline and is a population statement for continuous reward laws; Eq.~\eqref{eq:quantile} is its mid-rank plug-in. The deterministic tail cap of Proposition~\ref{prop:tail} and the conservation, boundedness, and sign-anchoring guarantees of Theorem~\ref{thm:mass} are specific to the composed TACE estimator. The information-asymmetry analysis that motivates direction anchoring is due to \citet{yang2026selfdistilledrlvr} and is cited, not restated.

\begin{proposition}[Exact distributional equity]
\label{prop:equity}
Let task $\tau$ have a continuous reward distribution $P_{\mathcal R_\tau}$ with CDF $F_\tau$, and let $A=\Phi^{-1}(F_\tau(R))$ with $R\sim P_{\mathcal R_\tau}$. Then $A\sim\mathcal N(0,1)$ exactly; in particular $\mathbb E|A|^{p}=m_p$ for every $p\ge 1$, with $m_p$ independent of $\tau$. Affine standardization $(R-\mu)/\sigma$ matches only the first two moments: for every $p>2$ there exist tasks $\tau,\tau'$ with the same mean and variance but $\mathbb E|A_\tau|^{p}\neq\mathbb E|A_{\tau'}|^{p}$.
\end{proposition}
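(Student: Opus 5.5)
The first claim follows from the probability integral transform. Because $F_\tau$ is continuous, $U\triangleq F_\tau(R)$ is uniform on $[0,1]$. To see this, take $u\in(0,1)$ and let $q(u)=\inf\{r:F_\tau(r)\ge u\}$. Continuity gives $F_\tau(q(u))=u$, and hence
\[
\{F_\tau(R)\le u\}=\{R\le q'(u)\}
\]
up to a null set, where $q'(u)=\sup\{r:F_\tau(r)\le u\}$. The interval on which $F_\tau$ is flat at level $u$ carries zero mass, so $\Pr(U\le u)=F_\tau(q'(u))=u$. The events $\{U=0\}$ and $\{U=1\}$ have probability zero, so $A=\Phi^{-1}(U)$ is almost surely well defined. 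Since $\Phi^{-1}$ is strictly increasing,
\[
\Pr(A\le a)=\Pr(U\le\Phi(a))=\Phi(a),
\]
and therefore $A\sim\mathcal N(0,1)$. It follows that $\mathbb E|A|^p=m_p\triangleq 2^{p/2}\Gamma\bigl(\tfrac{p+1}{2}\bigr)/\sqrt\pi$, which is finite and depends only on $p$, not on $\tau$.

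For the affine part I will give an explicit pair of tasks. Both are continuous, so they fall in the same setting as the first claim. Because $(R-\mu)/\sigma$ is invariant under affine changes of $R$, it is enough to fix $\mu=0$ and $\sigma=1$ and compare $A_\tau=R_\tau$ with $A_{\tau'}=R_{\tau'}$.

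\begin{itemize}[nosep,leftmargin=*]
\item Take $R_\tau\sim\mathcal N(0,1)$, so that $\mathbb E|A_\tau|^p=m_p$.
\item Take $R_{\tau'}\sim\mathrm{Unif}[-\sqrt3,\sqrt3]$, which also has mean $0$ and variance $1$. Its absolute moments are
\[
\mathbb E|A_{\tau'}|^p=\frac{3^{p/2}}{p+1}.
\]
\end{itemize}

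It remains to show that $m_p\neq 3^{p/2}/(p+1)$ for every $p>2$. Lyapunov's inequality alone is not enough, so I will use log-convexity instead.

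\begin{itemize}[nosep,leftmargin=*]
\item Define $f(p)=\log\mathbb E|X|^p$. For any nondegenerate $X$, $f$ is convex, and strictly so unless $|X|$ is a.s. constant. Both $|R_\tau|$ and $|R_{\tau'}|$ are non-constant.
\item The two laws agree at $p=0$ (both moments equal $1$) and at $p=2$ (both equal $1$).
\item The uniform law is bounded while the Gaussian has unbounded support, so the Gaussian moments grow strictly faster for large $p$. Comparing $\Gamma$-growth with exponential growth gives $m_p/(3^{p/2}/(p+1))\to\infty$.
\end{itemize}

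Equality at some $p_0>2$, added to the two agreements at $p=0$ and $p=2$, would give a third crossing of the two log-moment curves. To exclude this, I will show that their difference $g(p)=\log m_p-\log\bigl(3^{p/2}/(p+1)\bigr)$ has at most two zeros.

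This step needs care. The difference of two convex functions need not be convex, so the two-zero claim is not automatic. I see two options.

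\begin{itemize}[nosep,leftmargin=*]
\item Check convexity of $g$ directly. Here
\[
g''(p)=\tfrac14\psi'\bigl(\tfrac{p+1}{2}\bigr)+\frac{1}{(p+1)^2}>0,
\]
because the $p\log\sqrt3$ and $\log\sqrt\pi$ terms are affine in $p$ and $\psi'$ is positive. A strictly convex function has at most two zeros, and $g$ already vanishes at $p=0$ and $p=2$. So $g>0$ for all $p>2$, which proves the claim for every $p>2$ at once.
\item As a fallback, a simpler construction works: compare the Gaussian with a symmetric two-point-like continuous law concentrated near $\pm1$. That law has $\mathbb E|A|^p\approx1<m_p$ for all $p>2$, since $m_p>m_2=1$ by Lyapunov's inequality.
\end{itemize}

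I expect this uniform-in-$p$ separation to be the only real obstacle, and the convexity of $g$ resolves it.
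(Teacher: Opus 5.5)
Your treatment of the first claim is the paper's argument (probability integral transform, then $\Pr(A\le a)=\Pr(U\le\Phi(a))=\Phi(a)$), written out more carefully. It also records the almost-sure well-definedness of $\Phi^{-1}(U)$, which the paper leaves implicit.

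The affine part has a sign error at exactly the step you call the crux. Since $g(p)=\log m_p-\tfrac p2\log 3+\log(p+1)$ and $\log(p+1)$ is concave, the correct second derivative is
\[
g''(p)=\tfrac14\,\psi'\bigl(\tfrac{p+1}{2}\bigr)-\frac{1}{(p+1)^2}.
\]
So the justification ``$\psi'$ is positive'' does not establish convexity. The conclusion survives, though. With $x=\tfrac{p+1}{2}$ the second term is $\tfrac{1}{4x^2}$, and the recurrence $\psi'(x)-x^{-2}=\psi'(x+1)$ gives $g''(p)=\tfrac14\psi'\bigl(\tfrac{p+3}{2}\bigr)>0$. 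Equivalently, $(p+1)\Gamma\bigl(\tfrac{p+1}{2}\bigr)=2\Gamma\bigl(\tfrac{p+3}{2}\bigr)$ turns $g$ into $\tfrac p2\log\tfrac23+\log\Gamma\bigl(\tfrac{p+3}{2}\bigr)+\log\tfrac{2}{\sqrt\pi}$, which is strictly convex because $\log\Gamma$ is. With that repair, $g(0)=g(2)=0$ and strict convexity give $g>0$ on $(2,\infty)$, and your proof is complete.

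Your fallback is loose as stated. You do not show that one fixed continuous law near $\pm1$ works for all $p>2$ simultaneously. Choosing the law depending on $p$ would suffice for the statement as quantified.

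Once repaired, your route genuinely differs from the paper's. The paper compares the symmetric two-point law on $\pm1$, for which $|X|\equiv1$ and hence $\mathbb E|X|^p=1$, with a skewed mean-zero, unit-variance two-point law. It only asserts that the moments differ. This follows from strict Lyapunov/Jensen: $|X|$ is non-constant, so $\mathbb E|X|^p>(\mathbb E X^2)^{p/2}=1$ for every $p>2$ at once.

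That argument is more elementary, but both laws have atoms and so lie outside the continuity hypothesis under which the proposition is stated. For atomless laws $|X|$ is never almost surely constant, so the shortcut is unavailable and an explicit comparison like yours is needed. Your Gaussian-versus-uniform pair keeps the counterexample inside the proposition's setting. It also sharpens the contrast with the first claim, since one of the two affine scores is exactly the $\mathcal N(0,1)$ law that the rank map produces for every task. The price is a special-function convexity computation.
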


\begin{proposition}[Deterministic outlier bound]
\label{prop:tail}
For a group of $N$ rewards, the estimator of Eq.~\eqref{eq:quantile} satisfies $\max_i |A_i|\le \Phi^{-1}\!\bigl(1-\tfrac{1}{2N}\bigr)$, deterministically and irrespective of the reward scale. Under affine standardization the same quantity can reach $\sqrt{N-1}$ (one outlier against $N{-}1$ tied rewards). For $N{=}8$ the two bounds are $1.53$ vs.\ $2.65$; for $N{=}256$, $2.89$ vs.\ $15.97$.
\end{proposition}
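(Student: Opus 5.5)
The bound follows from the structure of Eq.~\eqref{eq:quantile}: every score is $\Phi^{-1}$ applied to a mid-rank grid point, with ties averaged. The second half needs a one-outlier computation for affine standardization.

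\textbf{Step 1: grid points lie in a bounded interval.} For any group of $N$ rewards with at least two distinct values, $\mathrm{rank}(R_i)\in\{1,\dots,N\}$. Hence $\widehat F(R_i)\in\{\tfrac{1}{2N},\tfrac{3}{2N},\dots,1-\tfrac{1}{2N}\}$. Since $\Phi^{-1}$ is strictly increasing and odd about $\tfrac12$, every untied score lies in $[\Phi^{-1}(\tfrac{1}{2N}),\Phi^{-1}(1-\tfrac{1}{2N})]=[-c_N,c_N]$, where $c_N=\Phi^{-1}(1-\tfrac{1}{2N})$.

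\textbf{Step 2: ties and degenerate groups.} A tied block receives the average of the quantile images of the positions it occupies. An average of numbers in $[-c_N,c_N]$ stays in that interval, so ties cannot increase $\max_i|A_i|$. If the group has no reward variation, the convention $A_i=0$ applies and the bound is trivial. No step uses the numerical values of the $R_i$, only their order, so the bound holds irrespective of the reward scale.

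\textbf{Step 3: the affine counterexample.} Take one reward equal to $a$ and $N-1$ rewards equal to $b\neq a$. Then $\mu=\tfrac{a+(N-1)b}{N}$, so the outlier's deviation is $a-\mu=\tfrac{(N-1)(a-b)}{N}$. With the population standard deviation, $\sigma^2=\tfrac{(N-1)(a-b)^2}{N^2}$. The ratio is therefore $\tfrac{(N-1)/N}{\sqrt{N-1}/N}=\sqrt{N-1}$, independent of $a$ and $b$.

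It is also worth noting, as a remark, that $\sqrt{N-1}$ is the maximum attainable under affine standardization. This follows by Cauchy--Schwarz: $\sum_j(R_j-\mu)=0$ gives $(R_i-\mu)^2\le(N-1)\sum_{j\neq i}(R_j-\mu)^2$, which yields $|A_i|\le\sqrt{N-1}$.

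\textbf{Step 4: numerical values.} For $N=8$: $c_8=\Phi^{-1}(0.9375)\approx1.53$ and $\sqrt{7}\approx2.65$. For $N=256$: $c_{256}=\Phi^{-1}(1-1/512)\approx2.89$ and $\sqrt{255}\approx15.97$.

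The main point needing care is Step 2's tie-averaging together with the convention for degenerate groups. Steps 1, 3 and 4 are direct computations.
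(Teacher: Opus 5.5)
Your proof is correct and follows the paper's argument. The mid-rank grid confines every empirical quantile to $[\tfrac{1}{2N},1-\tfrac{1}{2N}]$, and the one-outlier configuration under the population standard deviation gives exactly $\sqrt{N-1}$; your explicit handling of tied blocks and degenerate groups, and the Cauchy--Schwarz remark showing that $\sqrt{N-1}$ is in fact the affine worst case, are correct additions that the paper leaves implicit or omits.
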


\begin{theorem}[Mass conservation, boundedness, and sign anchoring]
\label{thm:mass}
For every trajectory and every realization of the privileged context $r$, the token weights of Eq.~\eqref{eq:clipped-credit} satisfy
\begin{equation*}
    w_{i,t}>0,\qquad
    \frac{1}{|y^{(i)}|}\sum_{t} w_{i,t}=1,\qquad
    \frac{1-\epsilon_w}{1+\epsilon_w}\le w_{i,t}\le\frac{1+\epsilon_w}{1-\epsilon_w},
\end{equation*}
and consequently $\frac{1}{|y^{(i)}|}\sum_t \hat A_{i,t}=A_i$ and $\mathrm{sign}(\hat A_{i,t})=\mathrm{sign}(A_i)$ for all $t$. Moreover, $Q_{i,t}$ in Eq.~\eqref{eq:token-rank} is invariant to every strictly increasing transform of $\Delta_{i,t}$. In contrast, the unnormalized RLSD weight $(P_T/P_S)^{\mathrm{sign}(A_i)}$ has trajectory mean generically different from $1$ and admits no scale-free description.
\end{theorem}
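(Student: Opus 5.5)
The plan is to read every claim off the explicit form of Eq.~\eqref{eq:clipped-credit}, using only elementary facts about clipping and normalization; no probabilistic input is needed. Throughout, assume $0<\epsilon_w<1$, so that the clip interval $[1-\epsilon_w,1+\epsilon_w]$ lies in the positive reals.

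\textbf{Bounds and mass.} First, because $e^{\mathrm{sign}(A_i)Q_{i,t}}$ is finite for every realization of $r$, each clipped value satisfies $u_{i,t}\in[1-\epsilon_w,1+\epsilon_w]$. Hence the trajectory mean $\bar u_i=\frac{1}{|y^{(i)}|}\sum_s u_{i,s}$ also lies in $[1-\epsilon_w,1+\epsilon_w]$, and in particular $\bar u_i>0$. This immediately gives $w_{i,t}=u_{i,t}/\bar u_i>0$. For the two-sided bound, take the ratio of extremes: the smallest possible numerator over the largest possible denominator, and conversely. This yields $\frac{1-\epsilon_w}{1+\epsilon_w}\le w_{i,t}\le\frac{1+\epsilon_w}{1-\epsilon_w}$. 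Mass conservation is the identity $\frac{1}{|y|}\sum_t u_{i,t}/\bar u_i=\bar u_i/\bar u_i=1$.

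The warm-up $w\leftarrow(1-\lambda)+\lambda w$ with $\lambda\in[0,1]$ should also be covered. It is a convex combination of $1$ and $w$, so it preserves the mean-one identity, positivity, and the interval, since that interval contains $1$.

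\textbf{Sign anchoring and averaging.} Write $\hat A_{i,t}=A_iw_{i,t}$. Linearity gives $\frac1{|y|}\sum_t\hat A_{i,t}=A_i\cdot 1=A_i$. Because $w_{i,t}>0$, we get $\mathrm{sign}(\hat A_{i,t})=\mathrm{sign}(A_i)$, including the degenerate case $A_i=0$ where both sides vanish. Note that positivity of $w$ is the only ingredient here; the rank construction plays no role.

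\textbf{Invariance.} If $g$ is strictly increasing, then $g(\Delta_{i,s})<g(\Delta_{i,t})$ iff $\Delta_{i,s}<\Delta_{i,t}$, and equality is likewise preserved. So the rank vector and tie classes within trajectory $i$ are unchanged, and so are the averaged mid-ranks and the values of $\Phi^{-1}$ in Eq.~\eqref{eq:token-rank}. Since $u$ and $w$ depend on $\Delta$ only through $Q$, they are invariant as well. One small point to handle: $\Phi^{-1}$ is well defined because $(\mathrm{rank}-\tfrac12)/|y|\in(0,1)$.

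\textbf{Contrast with RLSD.} This is the step I expect to be the only real obstacle, because ``generically'' and ``no scale-free description'' must be made precise. For the first claim, I would exhibit a single two-token trajectory and note that the trajectory mean $\frac12(e^{s\Delta_1}+e^{s\Delta_2})$ is a nonconstant analytic function of $(\Delta_1,\Delta_2)$. It equals $1$ only on a measure-zero set, so the mean differs from $1$ generically. For the second claim, I would show non-invariance under a strictly increasing map such as $\Delta\mapsto c\Delta$ or $\Delta\mapsto\Delta+b$. Shifting all log-evidences by $b$ multiplies every weight by $e^{sb}$, so the weights, and hence the total credit, change with the evidence scale. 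This is precisely the failure that the rank map and normalization remove.
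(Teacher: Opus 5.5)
Your proposal is correct and follows the paper's proof essentially step for step. Positivity and the two-sided bound come from $u_{i,t},\bar u_i\in[1-\epsilon_w,1+\epsilon_w]$, mass conservation is $\bar u_i/\bar u_i=1$, sign anchoring uses only $w_{i,t}>0$, and invariance holds because strictly increasing maps preserve ranks and tie classes. Your extras go slightly beyond the paper, which only asserts that the RLSD mean ``need not equal $1$'' and ``changes with the calibration of $\Delta$'': you check that the warm-up convex combination preserves all three properties, and you make the RLSD contrast precise via the measure-zero zero set of a nonconstant analytic mean and the shift $\Delta\mapsto\Delta+b$ rescaling every weight by $e^{sb}$.
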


Propositions~\ref{prop:equity} and~\ref{prop:tail}, together with Theorem~\ref{thm:mass}, separate the two concerns that GRPO conflates: stage~(S1) equalizes the \emph{distribution} of trajectory-level credit across tasks (not just its first two moments) with a hard tail cap, while stage~(S2) redistributes credit across tokens with its total mass and sign fixed. The two normalizations act on disjoint indices, with $i$ across the group and $t$ within the trajectory, so they compose without interference; Appendix~\ref{app:composition} makes this precise.

\subsection{Composed Estimator}
\label{sec:final-objective}

Substituting Eq.~\eqref{eq:quantile} and Eq.~\eqref{eq:clipped-credit} into Eq.~\eqref{eq:two-stage} gives the closed-form TACE advantage:
\begin{equation}
    \hat A_{i,t} \;=\; \Phi^{-1}\!\bigl(\widehat{F}_\tau(R_i)\bigr) \cdot \underbrace{\frac{u_{i,t}}{\frac{1}{|y^{(i)}|}\sum_{s} u_{i,s}}}_{\text{mean-one } w_{i,t}},\qquad
    u_{i,t} = \mathrm{clip}\!\Bigl(e^{\,\mathrm{sign}(A_i)\,Q_{i,t}},\,1-\epsilon_w,\,1+\epsilon_w\Bigr),
    \label{eq:final-advantage}
\end{equation}
which plugs into the standard clipped-surrogate objective:
\begin{equation}
    \mathcal{L}_{\mathrm{TACE}}(\theta) \;=\; \mathbb{E}\!\left[\frac{1}{G}\sum_{i=1}^{G}\frac{1}{|y^{(i)}|}\sum_{t=1}^{|y^{(i)}|} \min\!\bigl(\rho_{i,t}\,\hat A_{i,t},\;\mathrm{clip}(\rho_{i,t},1-\epsilon_{\mathrm{low}},1+\epsilon_{\mathrm{high}})\,\hat A_{i,t}\bigr)\right].
    \label{eq:final-objective}
\end{equation}
Writing $\hat A_{i,t} = f(R_i;\tau)\cdot w_{i,t}$ with $f=\Phi^{-1}\!\circ\widehat F_\tau$, the two stages compose without interference: $f$ is monotone (so $\mathrm{sign}(f)$ is the only feature $w$ reads), and $w_{i,t}>0$ with $\frac{1}{|y|}\sum_t w_{i,t}=1$ (Theorem~\ref{thm:mass}), so stage~(S2) reallocates credit without changing its mass or sign. Beyond GRPO, TACE adds one $O(N\log N)$ sort per group, one rank sort per trajectory, and a single additional privileged-context forward pass.

\section{Empirical Evaluation}
\label{sec:exp}

\subsection{Setup and Baselines}

\noindent \textbf{Training corpus and evaluation suite.}
We train on MMFineReason-123K~\citep{lin2026mmfinereasonclosingmultimodalreasoning}, derived from MMFineReason-1.8M by retaining items that Qwen3-VL-4B-Thinking misses in four independent rollouts. Evaluation covers five multimodal reasoning suites: \textbf{MMMU}~\citep{mmmu} (multi-discipline college-level VQA), \textbf{MathVista}~\citep{mathvista} (math reasoning over visual contexts), \textbf{MathVision}~\citep{mathvision} (competition-level visual math), \textbf{ZeroBench}~\citep{zerobench} (frontier-unsolvable stress test), and \textbf{WeMath}~\citep{wemath} (fine-grained math). All benchmarks report accuracy (\%).
For comparability, we follow the RLSD multimodal protocol for the corpus, backbone, benchmark suite, context length, rollout settings, and quoted baseline rows; the TACE, component-ablation, and late-training sensitivity runs are ours.

\noindent \textbf{Backbone and compared methods.}
The backbone in every run is \textbf{Qwen3-VL-8B-Instruct}~\citep{qwen3vl}. We compare: \textbf{Base VLM} (no additional post-training); \textbf{GRPO}~\citep{grpo}; \textbf{OPSD}~\citep{zhao2026opsd}; \textbf{SDPO}~\citep{hbotter2026reinforcementlearningselfdistillation_sdpo}; \textbf{GRPO+OPSD}, the additive-fusion baseline of \citet{yang2026selfdistilledrlvr}; and \textbf{RLSD}~\citep{yang2026selfdistilledrlvr}. The quoted baseline rows follow \citet{yang2026selfdistilledrlvr}. The bottom \textbf{TACE} row enables both stages, with stage~(S2) implemented as our rank-calibrated, mass-preserving credit rule.

\noindent \textbf{Optimization and hyperparameters.}
Our training pipeline is implemented on top of VERL~\citep{verl} and EasyR1~\citep{easyr1}. Each rollout uses an $8192$-token context window, allocated as a $4096$-token prompt and a $4096$-token response. The group size is $8$ rollouts per prompt, sampled at temperature $1.0$, and the effective optimizer batch contains $256$ prompts. The PPO-style importance-ratio clip is asymmetric, with $(\epsilon_{\text{low}},\epsilon_{\text{high}})=(0.2,0.28)$. For the composed TACE estimator, the stage-S2 hyperparameters follow RLSD: the warm-up coefficient $\lambda$ ramps linearly from $0$ to $1$ over the first $50$ optimization steps, and the credit-channel clip is $\epsilon_w=0.2$. The within-trajectory rank calibration and mean-one normalization introduce no additional hyperparameters. The privileged context $r$ used in stage S2 is the final ground-truth answer, as in RLSD.

\subsection{Experimental Results}

\subsubsection{Five-Benchmark Reasoning Results}
\label{sec:main-results}

\begin{table*}[t]
    \caption{%
        Accuracy after post-training Qwen3-VL-8B-Instruct.
        \textbf{Bold} marks the top score in each column.
        The Base, GRPO, OPSD, SDPO, GRPO+OPSD, and RLSD rows quote the numbers reported by RLSD; the \textit{TACE} row is ours.
    }
    \label{tab:multimodel}
    \centering
    \begin{tabular}{lccccc}
        \toprule
        Method & MMMU & MathVista & MathVision & ZeroBench & WeMath \\
        \midrule
        Base VLM  & 62.44 & 73.80 & 47.37 & 19.76          & 54.10 \\
        GRPO      & 65.11 & 76.20 & 48.82 & 22.60          & 56.57 \\
        OPSD      & 63.82 & 75.10 & 47.53 & 21.06          & 54.95 \\
        SDPO      & 65.11 & 74.00 & 47.27 & \textbf{25.15} & 52.19 \\
        GRPO+OPSD & 63.22 & 75.90 & 48.52 & 22.16          & 54.76 \\
        RLSD      & 67.22 & 78.10 & 52.73 & 24.85          & \textbf{58.00} \\
        \textbf{TACE} \textit{(Ours)} & \textbf{68.67} & \textbf{80.60} & \textbf{56.58} & 18.86 & 45.62 \\
        \bottomrule
    \end{tabular}
\end{table*}

Table~\ref{tab:multimodel} reports the five-benchmark comparison. The composed TACE estimator obtains the strongest results on MMMU, MathVista, and MathVision, while ZeroBench and WeMath remain stronger under existing baselines. This mixed pattern is consistent with the theory: rank-based normalization removes reward-scale and tail effects, but the composed credit rule can still underperform when verifier outcomes are sparse or when the privileged signal provides limited token-level evidence.

\subsubsection{Component Ablation}
\label{sec:component-ablation}

Table~\ref{tab:component-ablation} evaluates two reduced estimators on the first three benchmarks. \emph{w/o S1} replaces the quantile-normalized sample baseline with conventional affine group standardization while retaining the token-level credit channel. \emph{w/o S2} removes the rank-calibrated, mass-preserving token credit and uses uniform token weights while retaining the quantile baseline. Both reduced estimators degrade relative to full TACE, with the largest drop on MathVision in each case (6.05 points without S1 and 3.82 points without S2). Removing S1 also hurts more than removing S2 on every benchmark, indicating that distributional reward calibration is the stronger contributor, while the two stages remain complementary: removing either component breaks the invariant that the composed estimator is designed to preserve.
Appendix~\ref{app:hyperparameter-sensitivity} additionally reports a 14-run, one-factor-at-a-time short-horizon late-training sensitivity study; because each arm uses only 40 two-GPU continuation steps, we treat it as a stability diagnostic rather than a model-selection procedure.

\subsubsection{Qualitative Case Study}
\label{sec:case-study}

We illustrate the trained model's behavior on a WeMath example.

\begin{wrapfigure}{r}{0.34\linewidth}
    \centering
    \includegraphics[width=\linewidth]{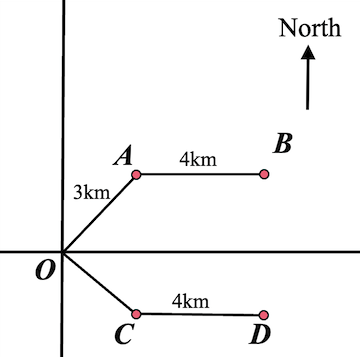}
    \caption{A WeMath example solved correctly by the trained model.}
    \label{fig:wemath-case}
\end{wrapfigure}

\noindent\textbf{Task.}
Figure~\ref{fig:wemath-case} shows a WeMath prompt that asks the model to read two segment lengths from the diagram and report
\[
OA=(\;\cdot\;)\ \text{m}, \qquad AB=(\;\cdot\;)\ \text{m}.
\]
The choices are \(A=(300,400)\), \(B=(400,300)\), \(C=(3000,4000)\), \(D=(4000,3000)\), and \(E=\) ``No correct answer.''

\noindent\textbf{Model reasoning.}
The model first extracts the two image-grounded relations \(OA=3\,\mathrm{km}\) and \(AB=4\,\mathrm{km}\). It then notices the mismatch between the unit in the diagram and the unit requested by the question, applies \(1\,\mathrm{km}=1000\,\mathrm{m}\), and obtains
\[
OA=3\times1000=3000\,\mathrm{m},\qquad
AB=4\times1000=4000\,\mathrm{m}.
\]
These values preserve the original \(OA\)-to-\(AB\) correspondence and select option~C.

\noindent\textbf{Analysis.}
This example tests substantive multimodal reasoning rather than merely matching an answer format. The model must recognize the two labeled segments, bind each numeric value to the correct segment rather than swapping them, detect that the question changes kilometers to meters, and apply the scale factor consistently to both quantities. The distractors intentionally probe the two most likely errors: a tenfold unit-conversion mistake and an order reversal. A correct answer therefore requires coordinated visual extraction, symbol grounding, and numeric transformation.

\begin{table}[t]
    \caption{Component ablation on the first three multimodal reasoning benchmarks. \emph{w/o S1} removes the quantile-normalized sample baseline; \emph{w/o S2} removes the rank-calibrated, mass-preserving token credit.}
    \label{tab:component-ablation}
    \centering
    \small
    \begin{tabular}{lccc}
        \toprule
        Method & MMMU & MathVista & MathVision \\
        \midrule
        TACE w/o S1 & 64.17 & 76.20 & 50.53 \\
        TACE w/o S2 & 65.31 & 77.58 & 52.76 \\
        \textbf{TACE} & \textbf{68.67} & \textbf{80.60} & \textbf{56.58} \\
        \bottomrule
    \end{tabular}
\end{table}

\section{Conclusion and Limitations}
\label{sec:conclusion}
GRPO's sequence-level estimate preserves the shape of each task's reward law, and uniform token credit spreads the resulting signal indiscriminately. \textbf{TACE} addresses these two issues with a sample-level optimal-transport baseline and a trajectory-level credit reallocation rule. Its additional computation consists of sorting and one privileged-context pass. Empirically, improvements on MMMU, MathVista, and MathVision coexist with regressions on ZeroBench and WeMath, indicating that distributional calibration does not by itself resolve sparse outcomes or weak token-level evidence. Our analysis is confined to verifier-only RLVR with a scalar outcome reward on multimodal reasoning; multi-marginal optimal transport for vector-valued rewards is the natural extension.

\bibliographystyle{plainnat}
\bibliography{references}

\newpage
\appendix
\section{Deferred Proofs}
\label{app:deferred-proofs}

This appendix proves the three results of Section~\ref{sec:theory}, states the composition property referenced there, and adds four analyses relevant to finite-sample RLVR: the binary-verifier case, uniform finite-sample accuracy of the empirical rank map, optimality of the transport map, and a deterministic bound on the composed token-level signal. Throughout, $\Phi$ denotes the standard normal CDF, $\Phi^{-1}$ its quantile function, and $z_\alpha=\Phi^{-1}(\alpha)$.

\subsection{Proof of Proposition~\ref{prop:equity} (Distributional Equity)}
\label{app:proof-equity}

Let $R\sim P_{\mathcal R_\tau}$ with continuous CDF $F_\tau$ and set $U=F_\tau(R)$. By the probability-integral transform, $U\sim\mathrm{Unif}[0,1]$. Since $\Phi$ is strictly increasing, $A=\Phi^{-1}(U)$ has CDF
\begin{equation*}
\Pr\{A\le a\} \;=\; \Pr\{\Phi^{-1}(U)\le a\} \;=\; \Pr\{U\le \Phi(a)\} \;=\; \Phi(a),
\end{equation*}
so $A\sim\mathcal N(0,1)$ regardless of $P_{\mathcal R_\tau}$; every absolute moment $\mathbb E|A|^p$ of a standard normal is therefore task-independent.

For the affine part, fix $p>2$ and consider two mean-zero, unit-variance reward laws: a maximally skewed two-point law placing mass $\eta\neq\tfrac12$ on the upper atom, and a symmetric two-point law. Their $p$-th absolute moments differ, and affine standardization preserves this shape difference while matching only the first two moments. \qed

\subsection{Proof of Proposition~\ref{prop:tail} (Outlier Bound)}
\label{app:proof-tail}

With the mid-rank convention of Eq.~\eqref{eq:quantile}, every empirical quantile lies in $[\tfrac{1}{2N},\,1-\tfrac{1}{2N}]$, so $|A_i|\le\Phi^{-1}(1-\tfrac{1}{2N})$ deterministically, independently of the reward scale because $\widehat F_\tau$ depends only on ranks.

For affine standardization, let one reward equal $c$ and the remaining $N-1$ equal $0$. Then $\mu=c/N$ and $\sigma^2=c^2(N-1)/N^2$, so the standardized outlier is $z=\sqrt{N-1}$ for every $c>0$. Thus the affine statistic reaches $\sqrt{N-1}$ exactly, while TACE caps the same configuration at $\Phi^{-1}(1-\tfrac{1}{2N})$. \qed

\subsection{Proof of Theorem~\ref{thm:mass} (Conservation, Boundedness, Sign)}
\label{app:proof-mass}

\emph{(i) Positivity.} $u_{i,t}\ge 1-\epsilon_w>0$ by the clip with $\epsilon_w\in(0,1)$, and the normalizer $\bar u_i=\frac{1}{|y|}\sum_s u_{i,s}$ is a mean of positive numbers, hence $w_{i,t}=u_{i,t}/\bar u_i>0$.

\emph{(ii) Mass conservation.} By construction $\frac{1}{|y|}\sum_t w_{i,t}=\bar u_i/\bar u_i=1$, hence $\frac{1}{|y|}\sum_t\hat A_{i,t}=A_i$.

\emph{(iii) Boundedness.} Since each $u_{i,s}\in[1-\epsilon_w,1+\epsilon_w]$, the mean satisfies $\bar u_i\in[1-\epsilon_w,1+\epsilon_w]$, whence $(1-\epsilon_w)/(1+\epsilon_w)\le w_{i,t}\le(1+\epsilon_w)/(1-\epsilon_w)$, a bound depending only on $\epsilon_w$.

\emph{(iv) Sign anchoring.} $w_{i,t}>0$ gives $\mathrm{sign}(\hat A_{i,t})=\mathrm{sign}(A_i)$ for every realization of $r$. The use of positivity to keep the verifier in control of the update direction originates in the leakage-free analysis of \citet{yang2026selfdistilledrlvr}; the present argument covers our rank-calibrated, mean-normalized weight, for which positivity is preserved under clipping and normalization.

\emph{(v) Scale invariance.} If $g$ is strictly increasing, ranks are unchanged, so $Q_{i,t}$, $u_{i,t}$, $w_{i,t}$, and $\hat A_{i,t}$ are unchanged. The unnormalized RLSD weight $(P_T/P_S)^{\mathrm{sign}(A_i)}=\exp(\mathrm{sign}(A_i)\Delta_{i,t})$ changes with the calibration of $\Delta$, and its trajectory mean need not equal $1$. \qed

\subsection{Composition Property}
\label{app:composition}

The stage-(S1) map acts on the sample index $i$ and depends only on $\{R_j\}$; the stage-(S2) map acts on the token index $t$ and reads stage~(S1) only through the fixed scalar $A_i$ and $\mathrm{sign}(A_i)$. Fixing $\{R_j\}$ therefore activates Theorem~\ref{thm:mass} verbatim, while Propositions~\ref{prop:equity} and~\ref{prop:tail} continue to hold for the trajectory-level scalar. Neither map reads a quantity the other is free to change, so cross-task equity and within-trajectory conservation hold simultaneously. \qed

\section{Additional Theoretical Guarantees}
\label{app:additional-theory}

\subsection{Binary and Tied Rewards}
\label{app:binary}

Binary verifiers are the leading RLVR case not covered by the continuity assumption in Proposition~\ref{prop:equity}. The mid-rank rule nevertheless has an exact closed form.

\begin{proposition}[Binary rewards]
\label{prop:binary}
Suppose a group contains $N$ rewards, $K$ equal to $1$ and $N-K$ equal to $0$, with $1\le K\le N-1$. Under the tied mid-rank convention,
\begin{equation}
    A_0=\Phi^{-1}\!\left(\frac{N-K}{2N}\right),
    \qquad
    A_1=\Phi^{-1}\!\left(1-\frac{K}{2N}\right),
    \label{eq:binary-scores}
\end{equation}
independently of the numeric labels assigned to success and failure. In particular,
\begin{equation*}
    \max\{|A_0|,|A_1|\}\le z_{1-\frac{1}{2N}}.
\end{equation*}
The corresponding affine scores are $\sqrt{(N-K)/K}$ and $-\sqrt{K/(N-K)}$; their magnitudes grow without bound as $K\to 0$ or $K\to N$, whereas Eq.~\eqref{eq:binary-scores} remains bounded for every nondegenerate $K$.
\end{proposition}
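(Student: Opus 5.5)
The proof is a direct computation with the tied mid-rank convention of Eq.~\eqref{eq:quantile}. Sort the group in ascending order. The $N-K$ failures occupy ranks $1,\dots,N-K$ and the $K$ successes occupy ranks $N-K+1,\dots,N$.

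\textbf{Failure score.} Averaging the mid-rank positions $(j-\tfrac12)/N$ over the failure block gives
\[
\frac{1}{N-K}\sum_{j=1}^{N-K}\frac{j-\frac12}{N}=\frac{N-K}{2N}.
\]

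\textbf{Success score.} Averaging over the success block gives
\[
\frac{1}{K}\sum_{j=N-K+1}^{N}\frac{j-\frac12}{N}=\frac{2N-K}{2N}=1-\frac{K}{2N}.
\]

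\textbf{Choice of averaging convention.} The convention ``average of quantile images'' can be read two ways: average the ranks and then apply $\Phi^{-1}$, or average the $\Phi^{-1}$ values. Eq.~\eqref{eq:binary-scores} corresponds to the first reading, averaging ranks before applying $\Phi^{-1}$. I will state this explicitly as the convention under which the closed form holds. This is the only delicate point.

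\textbf{Label independence.} The ranks depend only on the ordering of the two values. Any relabeling that keeps success above failure, i.e.\ any strictly increasing map of $\{0,1\}$, leaves $A_0$ and $A_1$ unchanged.

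\textbf{Bound.} Because $1\le K\le N-1$, we have
\[
\frac{1}{2N}\le \frac{N-K}{2N}\le \frac12-\frac{1}{2N}
\quad\text{and}\quad
\frac12+\frac{1}{2N}\le 1-\frac{K}{2N}\le 1-\frac{1}{2N}.
\]
So both arguments of $\Phi^{-1}$ lie in $[\tfrac{1}{2N},1-\tfrac{1}{2N}]$. Since $\Phi^{-1}$ is monotone and odd about $\tfrac12$, this gives $\max\{|A_0|,|A_1|\}\le z_{1-\frac{1}{2N}}$. Alternatively, the bound follows directly from the argument of Proposition~\ref{prop:tail}, since the block averages remain in the mid-rank grid's convex hull.

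\textbf{Affine comparison.} For rewards with $K$ ones and $N-K$ zeros, the mean is $\mu=K/N$ and the variance is $\sigma^2=K(N-K)/N^2$. Substituting,
\[
\frac{1-\mu}{\sigma}=\frac{(N-K)/N}{\sqrt{K(N-K)}/N}=\sqrt{\frac{N-K}{K}},
\qquad
\frac{0-\mu}{\sigma}=-\sqrt{\frac{K}{N-K}}.
\]
These are unbounded as $K/N\to0$ or $K/N\to1$.

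One caveat on that last claim. For fixed $N$ and integer $K\in[1,N-1]$, the affine magnitudes are capped at $\sqrt{N-1}$. The phrase ``grow without bound'' is therefore meant along sequences with $K/N\to 0$ or $1$, for example $K=1$ with $N\to\infty$. In that regime the affine score grows like $\sqrt{N}$, while the TACE score grows only like $z_{1-1/(2N)}\sim\sqrt{2\log N}$. I will phrase the conclusion in that form.
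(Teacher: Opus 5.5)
Your proposal is correct and matches the paper's proof: average over each tied block before applying $\Phi^{-1}$ (the convention the paper's proof also uses), check that both masses lie in $[\tfrac{1}{2N},1-\tfrac{1}{2N}]$, and standardize the two reward values directly. Your extremal example $K=1$ is the right one. The paper's proof instead cites $K=N-1$ for the successful score, but at $K=N-1$ it is the lone failure that attains affine magnitude $\sqrt{N-1}$.
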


\begin{proof}
The failed samples occupy ranks $1,\ldots,N-K$, whose average mid-rank is $(N-K+1)/2$, giving empirical mass $(N-K)/(2N)$. The successful samples occupy ranks $N-K+1,\ldots,N$, whose average mid-rank is $(2N-K+1)/2$, giving empirical mass $1-K/(2N)$. Both masses lie in $[1/(2N),1-1/(2N)]$, which gives the stated cap. The affine formulas follow by direct standardization of the two sample values. When $K=N-1$, for example, the successful affine score is exactly $\sqrt{N-1}$, while the successful rank score is $z_{1-1/(2N)}$.
\end{proof}

Equation~\eqref{eq:binary-scores} also yields exact conditional moments. Given $K$,
\begin{equation}
    \bar A_K=\frac{K A_1+(N-K)A_0}{N},\qquad
    V_K=\frac{K(A_1-\bar A_K)^2+(N-K)(A_0-\bar A_K)^2}{N},
    \label{eq:binary-moments}
\end{equation}
and both are determined by $(N,K)$ alone. Thus two binary tasks with the same group size and success count receive the same conditional score geometry even if their raw reward semantics differ. The rank map is intentionally nonlinear, so $\bar A_K$ need not equal zero when $K\ne N/2$; it trades exact centering for a deterministic, scale-free tail cap. Affine standardization achieves exact centering but pays for it with the $1/\sqrt{K}$ and $1/\sqrt{N-K}$ singularities at the boundary.

\subsection{Finite-Sample Accuracy of the Rank Baseline}
\label{app:finite-sample}

The next result quantifies the plug-in error in Eq.~\eqref{eq:quantile}. It is a nonasymptotic statement; no large-group approximation is used.

\begin{proposition}[High-probability quantile accuracy]
\label{prop:finite-sample}
Let $R_1,\ldots,R_N$ be i.i.d.\ draws from a continuous CDF $F$, and let $\widehat F$ denote their mid-rank empirical CDF. Fix
\begin{equation*}
    \eta\in\left(\frac{1}{2N},\frac14\right)
\end{equation*}
and an index $i$. With probability at least
\begin{equation*}
    1-2\exp\!\left(-2N\left(\eta-\frac{1}{2N}\right)^2\right)-4\eta,
\end{equation*}
the sample lies in the interior $F(R_i)\in[2\eta,1-2\eta]$ and
\begin{equation}
    \left|\Phi^{-1}(\widehat F(R_i))-\Phi^{-1}(F(R_i))\right|
    \le
    \frac{\eta}{\phi\!\left(z_{1-\eta}\right)},
    \label{eq:finite-sample-bound}
\end{equation}
where $\phi$ is the standard normal density. Consequently, for every $p\ge1$, the finite-sample absolute score differs from its population counterpart on the same event by at most
\begin{equation*}
    \left|
        \left|\Phi^{-1}(\widehat F(R_i))\right|^p
        -
        \left|\Phi^{-1}(F(R_i))\right|^p
    \right|
    \le
    \left(\frac{\eta}{\phi(z_{1-\eta})}\right)^p,
\end{equation*}
where the population score is standard normal.
\end{proposition}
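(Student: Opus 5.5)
The plan is to condition on the value $R_i$, control the mid-rank plug-in error $|\widehat F(R_i)-F(R_i)|$ with Hoeffding's inequality, restrict to the interior event, and then transfer the error through $\Phi^{-1}$ by the mean value theorem.

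\emph{Step 1 (rank as a binomial count).} Since $F$ is continuous, ties occur with probability zero. Let $M=\#\{j\neq i: R_j<R_i\}$. Then $\mathrm{rank}(R_i)=M+1$, so $\widehat F(R_i)=(M+\tfrac12)/N$. Conditionally on $R_i$, the count $M$ is $\mathrm{Bin}(N-1,F(R_i))$. Writing $F=F(R_i)$, we get
\begin{equation*}
\widehat F(R_i)-F=\frac{N-1}{N}\Bigl(\frac{M}{N-1}-F\Bigr)+\frac{\tfrac12-F}{N}.
\end{equation*}
The last term has absolute value at most $\tfrac{1}{2N}$. Hence $|\widehat F(R_i)-F|\le\eta$ as soon as $|M/(N-1)-F|\le s$ with $s=\tfrac{N}{N-1}(\eta-\tfrac{1}{2N})$. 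This $s$ is positive because $\eta>\tfrac{1}{2N}$.

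\emph{Step 2 (probabilities).} Conditional Hoeffding gives that the complement of the Step 1 event has probability at most $2\exp(-2(N-1)s^2)$. Since $2(N-1)s^2=\tfrac{2N^2}{N-1}(\eta-\tfrac{1}{2N})^2\ge 2N(\eta-\tfrac1{2N})^2$, this is at most $2\exp(-2N(\eta-\tfrac{1}{2N})^2)$. The bound is uniform in $R_i$, so it survives integrating over $R_i$. By the probability integral transform (as in the proof of Proposition~\ref{prop:equity}), $F(R_i)\sim\mathrm{Unif}[0,1]$. Therefore the interior event $F(R_i)\in[2\eta,1-2\eta]$ fails with probability exactly $4\eta$. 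A union bound then yields the stated probability.

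\emph{Step 3 (transfer through $\Phi^{-1}$).} On the joint event, $F(R_i)\in[2\eta,1-2\eta]$ and $|\widehat F(R_i)-F(R_i)|\le\eta$, so both arguments lie in $[\eta,1-\eta]$. The derivative of $\Phi^{-1}$ is $1/\phi(\Phi^{-1}(u))$. It is symmetric about $u=\tfrac12$ and increasing in $|u-\tfrac12|$, so on $[\eta,1-\eta]$ it is at most $1/\phi(z_{1-\eta})$. The mean value theorem then gives Eq.~\eqref{eq:finite-sample-bound}.

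\emph{Step 4 (the $p$-th power statement), which I expect to be the main obstacle.} For $p=1$ the claim follows at once from the reverse triangle inequality. For $p>1$, however, the inequality $\bigl||a|^p-|b|^p\bigr|\le|a-b|^p$ is false in general; for example $a=2$, $b=1$, $p=2$ gives $3>1$. So it cannot follow from Eq.~\eqref{eq:finite-sample-bound} alone.

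My first attempt would be to use the additional information that both $a=\Phi^{-1}(\widehat F(R_i))$ and $b=\Phi^{-1}(F(R_i))$ lie in $[-z_{1-\eta},z_{1-\eta}]$. With $\delta=\eta/\phi(z_{1-\eta})$, the mean value theorem then gives
\begin{equation*}
\bigl||a|^p-|b|^p\bigr|\le p\,z_{1-\eta}^{p-1}\,\delta .
\end{equation*}
Since $\eta<\tfrac14$ forces $z_{1-\eta}>0.67$, this does not reduce to $\delta^p$ in general. I would therefore either prove the statement in this corrected Lipschitz form, or restrict the displayed $p$-th power bound to $p=1$.

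The final clause, that the population score is standard normal, is Proposition~\ref{prop:equity} applied directly.
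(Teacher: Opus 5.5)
Your Steps 1--3 are correct, and they prove the first part by a different route from the paper. The paper works with the ordinary empirical CDF $\bar F$ and observes that the mid-rank version satisfies $\sup_u|\widehat F(u)-\bar F(u)|\le\frac{1}{2N}$. It then applies the Dvoretzky--Kiefer--Wolfowitz inequality at level $\eta-\frac{1}{2N}$ to get $\sup_u|\widehat F(u)-F(u)|\le\eta$. From there, the interior event, the union bound and the mean-value step are the same as yours. You instead condition on $R_i$, write the rank through $M\sim\mathrm{Bin}(N-1,F(R_i))$, and need only Hoeffding's inequality. Your argument is more elementary, avoids DKW altogether, and gives the slightly better exponent $\frac{2N^2}{N-1}\bigl(\eta-\frac{1}{2N}\bigr)^2$. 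The paper's uniform-in-$u$ control has a different advantage: a single exceptional event of probability $2\exp\bigl(-2N(\eta-\frac{1}{2N})^2\bigr)$ serves all indices $i$ at once. That matters if you want the bound for the whole group simultaneously.

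On Step 4 you are right, and this is exactly where the paper's own proof breaks. The paper ends by citing $\bigl||x|-|y|\bigr|^p\le|x-y|^p$, which is true but controls $\bigl||x|-|y|\bigr|^p$, not the displayed $\bigl||x|^p-|y|^p\bigr|$. Your remark that the Lipschitz bound ``does not reduce to $\delta^p$'' is not by itself a disproof. However, the display really is false on the event for $p>1$, not merely unproved:

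\begin{itemize}
\item Take $p=2$ and $\eta=0.01$, so $\delta=\eta/\phi(z_{0.99})\approx0.375$ and $\delta^2\approx0.14$.
\item For $N=1000$, take a sample with $F(R_i)\approx0.979$, with $R_i$ of rank $985$ (so $\widehat F(R_i)=0.9845$), and with the other points tracking $F$ closely.
\item This sample lies, with slack, in the interior event, in the DKW event, and in your pointwise event. Hence a neighborhood of positive probability does too.
\item Yet $z_{0.9845}^2-z_{0.979}^2\approx0.52>\delta^2$.
\end{itemize}

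The gap widens as $\eta\to0$. Realizations with $F(R_i)\approx1-2\eta$ and $\widehat F(R_i)\approx1-\eta$ give $z_{1-\eta}^2-z_{1-2\eta}^2\to2\ln2$, while the Mills-ratio bound gives $\delta<1/z_{1-\eta}\to0$.

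So the corrections you propose are the right ones. Keep the display only for $p=1$. For $p>1$, state either $\bigl||a|-|b|\bigr|^p\le\delta^p$, which is what the paper's inequality actually delivers, or your Lipschitz form $\bigl||a|^p-|b|^p\bigr|\le p\,z_{1-\eta}^{p-1}\delta$.
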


\begin{proof}
Let $\bar F(u)=N^{-1}\sum_j\mathbf 1\{R_j\le u\}$ denote the ordinary empirical CDF. The mid-rank version differs from it by at most $1/(2N)$ uniformly:
\begin{equation*}
    \sup_u|\widehat F(u)-\bar F(u)|\le\frac{1}{2N}.
\end{equation*}
By the Dvoretzky--Kiefer--Wolfowitz inequality,
\begin{equation*}
    \Pr\left(\sup_u|\bar F(u)-F(u)|>\eta-\frac{1}{2N}\right)
    \le
    2\exp\!\left(-2N\left(\eta-\frac{1}{2N}\right)^2\right).
\end{equation*}
On this event, $\sup_u|\widehat F(u)-F(u)|\le\eta$. Since $F(R_i)\sim\mathrm{Unif}[0,1]$ under the continuous-law assumption, the probability that it falls outside $[2\eta,1-2\eta]$ is exactly $4\eta$. On the intersection of these two events, $\widehat F(R_i)\in[\eta,1-\eta]$. The inverse normal CDF is differentiable on this interval, and its largest derivative there is $1/\phi(z_{1-\eta})$ because the normal density is minimized at the interval endpoints. The mean value theorem gives Eq.~\eqref{eq:finite-sample-bound}. Finally, $||x|-|y||^p\le|x-y|^p$ for $p\ge1$ gives the displayed score deviation.
\end{proof}

The bound is intentionally conservative in the Gaussian tails, where $\phi(z_{1-\eta})$ is small. Its purpose is not a tight numerical rate but a distribution-free guarantee: the probability failure term depends only on $N$ and $\eta$, not on the reward scale, support, or task.

\subsection{One-Dimensional Transport Optimality}
\label{app:ot}

The population map used in stage~(S1) is not merely a monotone normalization; it is the canonical solution of the quadratic transport problem.

\begin{proposition}[Quadratic optimality]
\label{prop:ot}
Let $\mu$ be a continuous reward law with CDF $F$ and finite second moment, and let $\nu=\mathcal N(0,1)$. Among all measurable maps $T$ such that $T_\#\mu=\nu$, the monotone map
\begin{equation*}
    T^\star(r)=\Phi^{-1}(F(r))
\end{equation*}
minimizes $\mathbb E\{(R-T(R))^2\}$. If $\mu$ is continuous, this monotone map is unique $\mu$-a.s. The minimum equals $W_2^2(\mu,\nu)$.
\end{proposition}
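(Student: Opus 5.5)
The plan is to use the quantile coupling and reduce everything to the classical rearrangement argument.

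\emph{Step 1 (feasibility).} Since $\mu$ is continuous, $U=F(R)\sim\mathrm{Unif}[0,1]$, exactly as in the proof of Proposition~\ref{prop:equity}. Hence $T^\star(R)=\Phi^{-1}(U)\sim\nu$, so $T^\star_\#\mu=\nu$ and $T^\star$ is feasible.

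\emph{Step 2 (expand the cost).} Take any feasible $T$. Because $T(R)\sim\nu$ for every such $T$, the marginal second moments
\[
\mathbb E R^2<\infty,\qquad \mathbb E\,T(R)^2=1
\]
do not depend on $T$. Expanding the square, minimizing $\mathbb E\{(R-T(R))^2\}$ is therefore equivalent to maximizing the cross term $\mathbb E\{R\,T(R)\}$. This expectation is finite by Cauchy--Schwarz.

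\emph{Step 3 (the main step: maximal correlation).} We must show that among all couplings $\pi$ of $(\mu,\nu)$, the comonotone coupling $(F^{-1}(U),\Phi^{-1}(U))$ maximizes $\mathbb E_\pi[XY]$. I would use Hoeffding's covariance identity:
\[
\mathbb E[XY]-\mathbb E X\,\mathbb E Y=\iint\bigl(H(x,y)-F(x)\Phi(y)\bigr)\,dx\,dy,
\]
where $H$ is the joint CDF of the coupling. The Fréchet--Hoeffding upper bound gives $H(x,y)\le\min\{F(x),\Phi(y)\}$, and equality holds exactly for the comonotone coupling. The marginal means are fixed, so the cross term is maximized there.

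The joint law of $(R,T^\star(R))$ is that comonotone coupling, because $R=F^{-1}(U)$ a.s.\ when $F$ is continuous. Any map-induced coupling $(R,T(R))$ is a special coupling, so $T^\star$ attains the maximum. Minimizing over the larger set of all couplings gives $W_2^2(\mu,\nu)$ by definition. Since the optimum is attained by a map, the minimum over maps also equals $W_2^2(\mu,\nu)$.

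\emph{Step 4 (uniqueness).} Suppose a feasible $T$ achieves the optimum. Then its coupling has $\iint(\min\{F,\Phi\}-H)=0$. The integrand is nonnegative and right-continuous, so $H=\min\{F,\Phi\}$ everywhere, and the coupling is comonotone. A comonotone coupling has support contained in a monotone (nondecreasing) set. The graph of $T$ must lie in that set $\mu$-a.s., so $T$ is a.s.\ nondecreasing with $T_\#\mu=\nu$.

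It remains to show such a map is unique. A nondecreasing map pushing $\mu$ to $\nu$ satisfies $\{T\le\Phi^{-1}(F(r))\}\supseteq\{R\le r\}$ up to null sets. Because $\mu$ has no atoms, this forces $T(r)=\Phi^{-1}(F(r))$ for $\mu$-a.e.\ $r$.

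\emph{Main obstacle.} I expect the main care to be needed in Step 4's passage from "comonotone coupling" to "equals $T^\star$ a.s." This requires ruling out flat pieces of $F$ (intervals of zero $\mu$-mass, which are null and harmless) and using atomlessness to avoid splitting mass. Alternatively, I could cite Brenier's theorem in one dimension, which the paper already invokes in Section~\ref{sec:s1-quantile}.
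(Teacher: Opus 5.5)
Your proof is correct and has the same skeleton as the paper's. Both arguments note that the marginal second moments are fixed, so minimizing $\mathbb E\{(R-T(R))^2\}$ is equivalent to maximizing $\mathbb E[R\,T(R)]$. Both then use that the comonotone coupling is the maximizer and that $(R,T^\star(R))$ realizes it.

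The difference is in how the maximal-correlation step and uniqueness are handled. The paper cites the Hardy--Littlewood rearrangement inequality. It settles uniqueness in one sentence: an optimal map may be chosen increasing, and the increasing one is unique when $\mu$ is atomless. That addresses uniqueness only within the class of monotone maps. Your route through Hoeffding's covariance identity and the Fr\'echet--Hoeffding bound $H\le\min\{F,\Phi\}$ proves the inequality instead of citing it. More usefully, its equality case shows that \emph{every} optimal map induces the comonotone coupling, so you get the stronger fact that any optimizer agrees with $T^\star$ $\mu$-a.s.

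You can also shorten the step you flag as the main obstacle. Once $H=\min\{F,\Phi\}$, the laws of $(R,T(R))$ and $(R,T^\star(R))$ coincide. Here both functions are continuous, because both marginals are atomless, so a zero integral of the nonnegative difference forces equality everywhere. Evaluating both laws on the Borel graph $\{(r,s): 0<F(r)<1,\ s=T^\star(r)\}$ then gives $\Pr\{T(R)=T^\star(R)\}=1$ at once. This avoids the detour through monotone supports. It also avoids the inclusion $\{T(R)\le\Phi^{-1}(F(r))\}\supseteq\{R\le r\}$, which as written is essentially the conclusion. Justifying it would need a two-sided argument: $F(r)\le\Phi(T(r))$ from $\{R\le r\}\subseteq\{T(R)\le T(r)\}$, and $\Phi(T(r))\le F(r)$ from $\{T(R)<T(r)\}\subseteq\{R<r\}$.
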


\begin{proof}
The pair $(R,T^\star(R))$ is comonotone because $r\mapsto F(r)$ and $u\mapsto\Phi^{-1}(u)$ are both nondecreasing. For square-integrable variables with the prescribed marginals, the Hardy--Littlewood rearrangement inequality says that a comonotone coupling maximizes $\mathbb E[XY]$, hence minimizes
\begin{equation*}
    \mathbb E[(X-Y)^2]=\mathbb E[X^2]+\mathbb E[Y^2]-2\mathbb E[XY].
\end{equation*}
Thus $(R,T^\star(R))$ is an optimal coupling, so $T^\star$ minimizes the transport cost among maps. In one dimension, an optimal map may be chosen increasing; when $\mu$ has no atoms, this gives uniqueness up to $\mu$-null sets. Finally, continuity makes $F(R)\sim\mathrm{Unif}[0,1]$, so the induced law has CDF
\begin{equation*}
    \Pr\{T^\star(R)\le a\}
    =
    \Pr\{F(R)\le\Phi(a)\}
    =
    \Phi(a),
\end{equation*}
so it indeed has target $\nu$.
\end{proof}

This supplies the precise sense in which TACE is distribution-aware: rather than matching two moments, it selects the monotone transport to a common reference law at minimal quadratic cost. The empirical mid-rank map in Eq.~\eqref{eq:quantile} is its usual plug-in approximation, with the accuracy controlled in Appendix~\ref{app:finite-sample}.

\subsection{Deterministic Bound on the Composed Signal}
\label{app:composed-bound}

Combining the two stages yields a scale-free bound on every token-level coefficient entering the clipped objective.

\begin{proposition}[Bounded composed advantage]
\label{prop:composed-bound}
Let $z_N=z_{1-1/(2N)}$ and $\rho_w=(1+\epsilon_w)/(1-\epsilon_w)$. For every reward realization, trajectory, and token,
\begin{equation}
    |\hat A_{i,t}|
    \le
    z_N\rho_w.
    \label{eq:composed-bound}
\end{equation}
Moreover,
\begin{equation}
    \frac1{|y^{(i)}|}\sum_t \hat A_{i,t}^2
    \le
    z_N^2\rho_w^2.
    \label{eq:composed-second}
\end{equation}
The bound is independent of reward magnitude, reward support, trajectory length, privileged evidence calibration, and task identity.
\end{proposition}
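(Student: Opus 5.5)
The bound follows by multiplying the two deterministic caps already established, one per stage, using the factorization $\hat A_{i,t}=A_i\,w_{i,t}$ of Eq.~\eqref{eq:final-advantage}. The plan is to bound each factor separately and take the product.

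\emph{First factor.} In the nondegenerate case, Proposition~\ref{prop:tail} gives $|A_i|\le z_{1-1/(2N)}=z_N$ for every reward realization. The reason is that every tied mid-rank value lies in $[\tfrac{1}{2N},1-\tfrac{1}{2N}]$; this includes the averaged-tie convention, since an average of grid points in that interval stays in it, which is also visible in Proposition~\ref{prop:binary}. In the degenerate case (no reward variation) we have $A_i=0$, so the bound is trivial.

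\emph{Second factor.} Theorem~\ref{thm:mass}(iii) gives $0<w_{i,t}\le\rho_w$ for every trajectory and every realization of $r$.

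\emph{Warm-up.} I will handle the warm-up explicitly. The interpolated weight $(1-\lambda)+\lambda w_{i,t}$ with $\lambda\in[0,1]$ is a convex combination of $1$ and $w_{i,t}$. Since $1\le\rho_w$, it is still at most $\rho_w$, and it stays positive.

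\emph{Pointwise bound.} Multiplying the two caps yields $|\hat A_{i,t}|=|A_i|\,w_{i,t}\le z_N\rho_w$, which is Eq.~\eqref{eq:composed-bound}.

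\emph{Second-moment bound.} For Eq.~\eqref{eq:composed-second}, I will square the pointwise bound and average over $t$: the mean of quantities each at most $z_N^2\rho_w^2$ is itself at most $z_N^2\rho_w^2$. A sharper route is also available. Writing $\frac{1}{|y|}\sum_t\hat A_{i,t}^2=A_i^2\cdot\frac1{|y|}\sum_t w_{i,t}^2$, the factor $\frac{1}{|y|}\sum_t w_{i,t}^2$ is at most $\rho_w\cdot\frac{1}{|y|}\sum_t w_{i,t}=\rho_w$ by mass conservation. This gives $z_N^2\rho_w$, which implies the stated bound because $\rho_w\ge1$. I will mention this refinement but prove the stated form.

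\emph{Independence claims.} The constants $z_N$ and $\rho_w$ depend only on $N$ and $\epsilon_w$. Reward magnitude and support cannot enter, because $A_i$ depends on the rewards only through their ranks. Trajectory length and evidence calibration cannot enter, because the bound on $w$ holds uniformly in $|y|$ and in $\Delta$ (Theorem~\ref{thm:mass}(v)). Task identity enters only through the group size $N$.

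\emph{Main obstacle.} There is essentially no analytic obstacle. The one point needing care is making sure that ties, degenerate groups, and the warm-up interpolation do not escape the two caps; I will check these three edge cases explicitly as above.
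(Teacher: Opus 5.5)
Your proof is correct and follows the paper's route: $|A_i|\le z_N$ from Proposition~\ref{prop:tail}, $0<w_{i,t}\le\rho_w$ from Theorem~\ref{thm:mass}(iii), multiply the two caps, then average the squared pointwise bound for Eq.~\eqref{eq:composed-second}. Your explicit checks of ties, degenerate groups, and the warm-up interpolation are valid additions the paper leaves implicit, and so is your sharper second-moment bound $z_N^2\rho_w$, which combines $w_{i,t}\le\rho_w$ with mass conservation.
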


\begin{proof}
Proposition~\ref{prop:tail} gives $|A_i|\le z_N$, while Theorem~\ref{thm:mass}(iii) gives $0<w_{i,t}\le\rho_w$. Since $\hat A_{i,t}=A_iw_{i,t}$, Eq.~\eqref{eq:composed-bound} follows immediately. Averaging the tokenwise bound gives Eq.~\eqref{eq:composed-second}.
\end{proof}

This upgrades the sample-level outlier cap to a uniform bound on the training signal. In contrast, affine standardization can make a single trajectory coefficient as large as $\sqrt{N-1}$, and an unnormalized evidence-ratio channel can further multiply it by a trajectory-dependent factor with no deterministic cap. Equation~\eqref{eq:composed-bound} controls both sources simultaneously; the only remaining quantity in the update is the policy ratio $\rho_{i,t}$, whose range is governed separately by the PPO clip.

\section{Hyperparameter Sensitivity}
\label{app:hyperparameter-sensitivity}

This appendix reports a short-horizon late-training sensitivity study with 14 continuation runs under a shorter, downscaled protocol. The study is used to characterize stability, not to select a final configuration. Three runs repeat the default TACE configuration with different data seeds; the remaining eleven runs change one factor at a time. Each arm branches from the same late-training checkpoint and runs for 40 additional optimization steps with two GPUs under the same trajectory-budget protocol; the checkpoint obtained after these continuation steps is evaluated. Warm-up schedules are measured relative to the start of continuation, and teacher refresh denotes synchronizing the privileged-context branch with the current policy at the stated interval. Thus, the default 50-step and the 100-step warm-up schedules do not finish within this 40-step window. The default uses full TACE, $\epsilon_w=0.2$, a 50-step credit warm-up, teacher refresh every 20 steps, rollout group size $8$, and data seed~1. Group-size arms adjust training and mini-batch sizes to keep the trajectory budget comparable.

We report the mean training reward over the final ten continuation steps and a 262-example WeMath choice-CoT pilot. All entries come from the same late-training sensitivity study and its reduced TACE variants; no external reference model is used. We omit MathVision because its full evaluation is already reported in the main results and is not directly comparable to this shorter two-GPU continuation protocol.

\begin{table}[t]
    \caption{Hyperparameter configuration of each sensitivity arm.}
    \label{tab:hyperparameter-configurations}
    \centering
    \small
    \resizebox{\textwidth}{!}{%
    \begin{tabular}{llrrrrr}
        \toprule
        Arm & Estimator & $\epsilon_w$ & Warm-up & Teacher refresh & Group size & Seed \\
        \midrule
        \texttt{base\_seed1} & TACE & 0.2 & 50 & 20 & 8 & 1 \\
        \texttt{base\_seed2} & TACE & 0.2 & 50 & 20 & 8 & 2 \\
        \texttt{base\_seed3} & TACE & 0.2 & 50 & 20 & 8 & 3 \\
        \texttt{s1\_only} & TACE w/o S2 & -- & -- & -- & 8 & 1 \\
        \texttt{s2\_only} & TACE w/o S1 & 0.2 & 50 & 20 & 8 & 1 \\
        \texttt{eps010} & TACE & 0.1 & 50 & 20 & 8 & 1 \\
        \texttt{eps030} & TACE & 0.3 & 50 & 20 & 8 & 1 \\
        \texttt{warm000} & TACE & 0.2 & 0 & 20 & 8 & 1 \\
        \texttt{warm025} & TACE & 0.2 & 25 & 20 & 8 & 1 \\
        \texttt{warm100} & TACE & 0.2 & 100 & 20 & 8 & 1 \\
        \texttt{refresh010} & TACE & 0.2 & 50 & 10 & 8 & 1 \\
        \texttt{refresh050} & TACE & 0.2 & 50 & 50 & 8 & 1 \\
        \texttt{group4} & TACE & 0.2 & 50 & 20 & 4 & 1 \\
        \texttt{group16} & TACE & 0.2 & 50 & 20 & 16 & 1 \\
        \bottomrule
    \end{tabular}%
    }
\end{table}

\begin{table}[t]
    \caption{Late-training sensitivity results. Reward is the mean over the final ten continuation steps. Clip is the mean response-token clipping ratio, and time is mean wall-clock time per continuation step.}
    \label{tab:hyperparameter-results}
    \centering
    \small
    \resizebox{\textwidth}{!}{%
    \begin{tabular}{lrrrr}
        \toprule
        Arm & Final-10 reward & WeMath pilot & Clip (\%) & Step time (s) \\
        \midrule
        \texttt{base\_seed1} & 0.3957 & 74.43 & 24.03 & 344.7 \\
        \texttt{base\_seed2} & 0.4159 & 78.63 & 25.50 & 361.4 \\
        \texttt{base\_seed3} & 0.3573 & 76.72 & 36.53 & 345.4 \\
        \texttt{s1\_only} & 0.4165 & 74.81 & 21.38 & 298.6 \\
        \texttt{s2\_only} & 0.3841 & 70.61 & 24.92 & 344.7 \\
        \texttt{eps010} & 0.4199 & 74.43 & 21.38 & 371.0 \\
        \texttt{eps030} & 0.3985 & 75.95 & 23.44 & 371.2 \\
        \texttt{warm000} & 0.3942 & 72.14 & 33.61 & 374.2 \\
        \texttt{warm025} & 0.3932 & 73.66 & 25.23 & 368.9 \\
        \texttt{warm100} & 0.4020 & 75.19 & 30.11 & 382.3 \\
        \texttt{refresh010} & 0.4022 & 72.52 & 23.10 & 346.2 \\
        \texttt{refresh050} & 0.3878 & 75.57 & 26.13 & 344.7 \\
        \texttt{group4} & 0.3457 & 76.72 & 25.43 & 375.5 \\
        \texttt{group16} & 0.3513 & 72.90 & 24.23 & 344.3 \\
        \bottomrule
    \end{tabular}%
    }
\end{table}

\begin{table}[t]
    \caption{Variation among the three repeated default-configuration seeds. The standard deviation uses the $n-1$ denominator.}
    \label{tab:baseline-seed-variation}
    \centering
    \small
    \begin{tabular}{lrrl}
        \toprule
        Metric & Mean & Sample SD & Range \\
        \midrule
        Final-10 reward & 0.3896 & 0.0298 & 0.3573--0.4159 \\
        WeMath pilot & 76.59 & 2.09 & 74.43--78.63 \\
        \bottomrule
    \end{tabular}
\end{table}

Seed variation is a substantial source of uncertainty in this short sweep. Repeated controls span $0.0586$ final-ten reward units and $4.20$ WeMath points, so single-run perturbations smaller than these ranges should not be interpreted as reliable improvements. The two largest reward gains are modest relative to control noise: $\epsilon_w=0.1$ improves reward by $1.02$ seed standard deviations and S1-only by $0.90$, but neither improves the WeMath pilot over the repeated-control mean or best seed.

WeMath differences should also be read descriptively. The repeated controls span $74.43$--$78.63$, and the best score belongs to a control seed; no non-control arm exceeds it, although \texttt{group4} ties the third control at $76.72$. This pattern is consistent with short-run variation rather than a causal effect of any one perturbation. Moving group size away from $8$ is negative for training reward in both point estimates: group sizes $4$ and $16$ reduce final-ten reward by $1.48$ and $1.29$ seed standard deviations, respectively. Warm-up and teacher-refresh effects are non-monotone and do not support a clear directional recommendation.

Overall, the sweep shows that short-run reward does not reliably rank external WeMath performance: the highest-reward arms remain near the repeated-control mean on WeMath, while the best WeMath score belongs to a control seed. We therefore use these runs as evidence that TACE's behavior should be assessed across seeds and external benchmarks, not as a basis for changing the main hyperparameters after observing validation outcomes.
\end{document}